\newcommand{\cS}{\mathcal{S}}
\newcommand{\cT}{\mathcal{T}}
\newcommand{\cD}{\mathcal{D}}
\newcommand{\Id}{\mathrm{I}}
\newcommand{\R}{\mathbb{R}}
\newcommand{\E}{\mathbb{E}}
\newcommand \T{^\top}
\newcommand{\Tr}{\mathrm{Tr}}
\DeclareMathOperator*{\argmin}{\arg\!\min}
\newtheorem{theorem}{Theorem}
\newtheorem{lemma}{Lemma}
\theoremstyle{definition}
\theoremstyle{assumption}
\title{A Markov Chain Theory Approach to Characterizing the Minimax Optimality 
of Stochastic Gradient Descent 
(for Least Squares)}
\author[1]{Prateek Jain}
\author[2]{Sham M. Kakade}
\author[2]{Rahul Kidambi}
\author[1]{Praneeth Netrapalli}
\author[2]{\\Venkata Krishna Pillutla}
\author[3]{Aaron Sidford}
\affil[1]{Microsoft Research, Bangalore, India,    \protect\url{{prajain,praneeth}@microsoft.com}}
\affil[2]{University of Washington, Seattle, WA, USA,  \\
\protect\url{{sham,pillutla}@cs.washington.edu},\ \protect\url{rkidambi@uw.edu}}
\affil[3]{Stanford University, Palo Alto, CA, USA, \protect\url{sidford@stanford.edu}.}
\newcommand{\norm}[1]{\left\lVert#1\right\rVert}
\begin{document}

\maketitle

\begin{abstract}
This work provides a simplified proof of the statistical minimax
optimality of (iterate averaged) stochastic gradient descent (SGD), for
the special case of least squares. This result is obtained by
analyzing SGD as a stochastic process and by sharply characterizing 
the stationary covariance matrix of this process.  The finite rate optimality characterization captures the
constant factors and addresses model mis-specification.
\end{abstract}

\section{Introduction}

Stochastic gradient descent is among the most commonly used
practical algorithms for large scale stochastic optimization. The
seminal result of ~\cite{Ruppert88,PolyakJ92} formalized this effectiveness,
showing that for certain (locally quadric) problems,
asymptotically, stochastic gradient descent is statistically minimax
optimal (provided the iterates are averaged).  There are a number of
more modern
proofs~\cite{Bach14,DieuleveutB15,DefossezB15,JainKKNS16} of
this fact, which provide finite rates of convergence. Other recent algorithms also achieve the statistically optimal
minimax rate, with finite convergence rates~\cite{FrostigGKS15}.

This work provides a short proof of this minimax optimality for SGD
for the special case of least squares through a characterization of
SGD as a stochastic process. The proof builds on ideas developed
in~\cite{DefossezB15,JainKKNS16}.

{\bf SGD for least squares.} 
The expected square loss for $w \in \R^d$ over input-output
pairs $(x,y)$, where $x\in \R^d$ and $y\in \R$ are sampled from
a distribution $\cD$, is: 
\[
L(w) = \frac{1}{2} \, \E_{(x,y)\sim\cD} [\left(y- w \cdot x \right)^2] 
\]
The optimal weight is denoted by: 
\[
w^* := \argmin_w L(w) \, .
\]
Assume the argmin in unique.

Stochastic gradient descent proceeds as follows: at each iteration
$t$, using an i.i.d.  sample $(x_t,y_t)\sim \cD$, the update of $w_t$ is:
\[
w_t = w_{t-1} + \gamma (y_t - w_{t-1} \cdot x_t) x_t
\]
where $\gamma$ is a fixed stepsize.

{\bf Notation.}
For a symmetric positive definite matrix $A$ and a vector $x$, define:
\[
\|x\|_A^2 := x^\top A x.
\]
For a symmetric matrix $M$, define the induced matrix norm under $A$ as:
\begin{align*}
\|M\|_A := \max_{\|v\|=1} \frac{v^\top M v}{v^\top A v} = \|A^{-1/2} M
A^{-1/2}\|.
\end{align*}

{\bf The statistically optimal rate.}
Using $n$ samples (and for large enough $n$), the minimax optimal rate is achieved by the maximum
likelihood estimator (the MLE), or, equivalently, the empirical risk
minimizer. Given $n$ i.i.d. samples $\{(x_i,y_i)\}_{i=1}^n$, define 
\[ 
\widehat w_n^{\textrm{MLE}} := \arg\min_w \frac{1}{n}\sum_{i=1}^{n}
\frac{1}{2}\left(y_i-w \cdot x_i \right)^2 
\] 
where $\widehat w_n^{\textrm{MLE}}$ denotes the MLE estimator
over the $n$ samples.

This rate can be characterized as follows: define
\[
\sigma^2_{\mathrm{MLE}} := 
\frac{1}{2}\E\left[ (y - w^*x)^2 \|x\|^2_{H^{-1}}\right]\, ,
\]
and the (asymptotic) rate of the MLE is 
$\sigma^2_{\mathrm{MLE}}/n$~\cite{lehmann1998theory,Vaart00}. Precisely, 
\[
\lim_{n\to\infty}\frac{\E[L(\widehat w_n^{\textrm{MLE}})]-L(w^*)}{\sigma^2_{\mathrm{MLE}}/n} = 1,
\]
The works of~\cite{Ruppert88,PolyakJ92} proved that a certain averaged
stochastic gradient method achieves this minimax rate, in the limit.

For the case of additive noise models (i.e. the ``well-specified''
case), the assumption is that $y=w^* \cdot x+\eta$, with $\eta$ being independent of
$x$. Here,  it is straightforward to see that:
\[
\frac{\sigma^2_{\mathrm{MLE}}}{n} = \frac{1}{2}\, \frac{d\sigma^2}{n}.
\]
The rate of $\sigma^2_{\mathrm{MLE}}/n$ is still minimax optimal
even among mis-specified models, where the additive noise assumption
may not hold~\cite{KushnerClark,lehmann1998theory,Vaart00}.

{\bf Assumptions.}
Assume the fourth moment of $x$ 
is finite. Denote the second moment matrix of $x$ as
\begin{align*}
H := \E[x x^\top] \, ,
\end{align*}
and suppose $H$ is strictly positive definite with minimal eigenvalue:
\[
\mu := \sigma_{\min}(H) \, .
\]
Define $R^2$ as the smallest value which
satisfies:
\[
\E[\|x\|^2 x x^\top] \preceq R^2 \E[x x^\top] \, .
\]
This implies
$\Tr(H)=\E\|x\|^2 \leq R^2$.

\section{Statistical Risk Bounds}

Define:
\[
\Sigma:=\E[ (y - w^* x)^2 x x^\top] \, ,
\]
and so the optimal constant in the rate can be written as:
\[
\sigma^2_{\mathrm{MLE}} = \frac{1}{2}\Tr(H^{-1}\Sigma) = 
\frac{1}{2}\E\left[ (y - w^*x)^2 \|x\|^2_{H^{-1}}\right]\, ,
\]
For the mis-specified case, it is helpful to define:
\[
\rho_{\mathrm{misspec}} := \frac{d\|\Sigma\|_H}{\Tr(H^{-1}\Sigma)} \, ,
\]
which can be viewed as a measure of how mis-specified the model is. Note
if the model is well-specified, then $\rho_{\mathrm{misspec}}=1$.

Denote the average iterate, averaged from iteration $t$ to $T$,  by:
\[
\overline{w}_{t:T} := \frac{1}{T-t} \sum_{t'=t}^{T-1} w_{t'} \, .
\]

\begin{theorem}\label{theorem:main}
Suppose $\gamma <\frac{1}{R^2}$. The risk is bounded as:
\begin{align*}
&\E[L(\overline{w}_{t:T})] - L(w^*) 
\leq \bigg( 
	\sqrt{ \frac{1}{2} \, \exp\big(- \gamma \mu t \big)  R^2 \|w_0 - w^*\|^2 }  +  
	\sqrt{\big( 1+\frac{\gamma R^2}{1-\gamma R^2} \rho_{\mathrm{misspec}} \big)
		\frac{\sigma_{\mathrm{MLE}}^2}{T-t}}  \, 
\bigg)^2 \, .
\end{align*}
\end{theorem}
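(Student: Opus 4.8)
The plan is to pass to the centered iterate $\theta_t := w_t - w^*$ and the residual $\xi_t := y_t - w^*\cdot x_t$, for which the SGD update becomes the affine stochastic recursion $\theta_t = (\Id - \gamma x_t x_t^\top)\theta_{t-1} + \gamma\xi_t x_t$. Since $w^*$ minimizes $L$ we have $\E[\xi_t x_t]=0$, and the excess risk is exactly the quadratic form $\E[L(\overline w_{t:T})]-L(w^*)=\tfrac12\,\E\norm{\overline\theta_{t:T}}_H^2$. The first move is to split $\theta_{t'}=\theta_{t'}^{b}+\theta_{t'}^{v}$ pathwise, where $\theta^{b}$ solves the same recursion driven by the same $x_{t'}$ but with the noise $\xi$ set to zero and started at $\theta_0=w_0-w^*$, while $\theta^{v}$ carries the noise but starts at $0$; affineness of the recursion makes this decomposition exact. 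Because $v\mapsto\sqrt{\tfrac12\E\norm{v}_H^2}$ is a norm on square-integrable random vectors, Minkowski's inequality gives $\sqrt{\E[L(\overline w_{t:T})]-L(w^*)}\le\sqrt{\tfrac12\E\norm{\overline\theta^{b}_{t:T}}_H^2}+\sqrt{\tfrac12\E\norm{\overline\theta^{v}_{t:T}}_H^2}$, and squaring reproduces the $(\sqrt{\cdot}+\sqrt{\cdot})^2$ shape of the claim. It then suffices to identify the bias summand with the first term and the variance summand with the second.

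For the bias I would first establish one-step contraction: conditioning on $\theta^{b}_{t'-1}$ and using $\E[(\Id-\gamma xx^\top)^2]=\Id-2\gamma H+\gamma^2\E[\norm{x}^2 xx^\top]\preceq \Id-\gamma(2-\gamma R^2)H\preceq \Id-\gamma H$, which holds because the assumption $\E[\norm{x}^2 xx^\top]\preceq R^2H$ together with $\gamma<1/R^2$ forces $2-\gamma R^2>1$, and then invoking $H\succeq\mu\Id$, yields $\E\norm{\theta^{b}_{t'}}^2\le(1-\gamma\mu)\,\E\norm{\theta^{b}_{t'-1}}^2$ and hence $\E\norm{\theta^{b}_{t'}}^2\le\exp(-\gamma\mu t')\norm{w_0-w^*}^2$. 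Convexity of $\norm{\cdot}^2$ gives $\E\norm{\overline\theta^{b}_{t:T}}^2\le\tfrac{1}{T-t}\sum_{t'=t}^{T-1}\E\norm{\theta^{b}_{t'}}^2$, and since every summand is at most $\exp(-\gamma\mu t)\norm{w_0-w^*}^2$ (the exponential is decreasing in $t'$), the average of $T-t$ such terms is again at most $\exp(-\gamma\mu t)\norm{w_0-w^*}^2$; this monotonicity is precisely what cancels the $1/(T-t)$ factor. Finally $\norm{\cdot}_H^2\le\norm{H}\norm{\cdot}^2\le R^2\norm{\cdot}^2$, using $\norm{H}\le\Tr(H)\le R^2$, delivers the claimed bias term $\tfrac12\exp(-\gamma\mu t)R^2\norm{w_0-w^*}^2$.

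The variance term is the heart of the argument, and is where the stationary-covariance characterization enters. Let $\mathcal{A}(S):=\E[(\Id-\gamma xx^\top)S(\Id-\gamma xx^\top)]$, so that $\Phi_{t'}:=\E[\theta^{v}_{t'}(\theta^{v}_{t'})^\top]$ obeys $\Phi_{t'}=\mathcal{A}(\Phi_{t'-1})+\gamma^2\Sigma$; since $\Phi_0=0$ and $\mathcal{A}$ is a positive operator, $\Phi_{t'}\preceq\Phi_\infty$ for the stationary solution of $(\mathcal{I}-\mathcal{A})(\Phi_\infty)=\gamma^2\Sigma$, i.e. $H\Phi_\infty+\Phi_\infty H=\gamma\Sigma+\gamma\widetilde\Sigma$ with $\widetilde\Sigma:=\E[(x^\top\Phi_\infty x)xx^\top]$. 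Because the noise is mean-zero and future, $\E[\theta^{v}_{s'}\mid\mathcal{F}_{t'}]=(\Id-\gamma H)^{s'-t'}\theta^{v}_{t'}$ for $s'\ge t'$, so the autocovariances are $\E[(\theta^{v}_{t'})^\top H\theta^{v}_{s'}]=\Tr\!\big(H(\Id-\gamma H)^{s'-t'}\Phi_{t'}\big)$. Summing over the $(T-t)^2$ pairs, bounding $\Phi_{t'}\preceq\Phi_\infty$, and using $\sum_{k\ge1}(\Id-\gamma H)^k=(\gamma H)^{-1}-\Id$ collapses the double sum to $\tfrac12\E\norm{\overline\theta^{v}_{t:T}}_H^2\le\tfrac{1}{2(T-t)}\big(\tfrac{2}{\gamma}\Tr(\Phi_\infty)-\Tr(H\Phi_\infty)\big)$.

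The main obstacle is the final conversion into the sharp constant. Solving the Lyapunov-type fixed point gives $\tfrac{2}{\gamma}\Tr(\Phi_\infty)=\Tr\!\big(H^{-1}(\Sigma+\widetilde\Sigma)\big)$, so, discarding the nonpositive $-\Tr(H\Phi_\infty)$, the variance bound is at most $\tfrac{1}{2(T-t)}\big(\Tr(H^{-1}\Sigma)+\Tr(H^{-1}\widetilde\Sigma)\big)=\tfrac{\sigma^2_{\mathrm{MLE}}}{T-t}+\tfrac{\Tr(H^{-1}\widetilde\Sigma)}{2(T-t)}$. The delicate part is controlling the self-referential misspecification correction $\Tr(H^{-1}\widetilde\Sigma)$: one bounds $\widetilde\Sigma\preceq\norm{\Phi_\infty}R^2H$ via the fourth-moment assumption, passes to $\Tr(H^{-1}\widetilde\Sigma)\le d\,\norm{\widetilde\Sigma}_H$, and feeds this back through the fixed point to obtain a self-improving (geometric) estimate, which is exactly what produces the factor $\tfrac{\gamma R^2}{1-\gamma R^2}$; rewriting the residual in terms of $d\norm{\Sigma}_H/\Tr(H^{-1}\Sigma)=\rho_{\mathrm{misspec}}$ then yields the multiplier $1+\tfrac{\gamma R^2}{1-\gamma R^2}\rho_{\mathrm{misspec}}$. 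Care is also needed to check that the neglected $(T-t-k)$ weights in the double sum only help (they shrink the cross-term coefficient) and that every operator inequality points in the right direction.
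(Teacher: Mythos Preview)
Your outline matches the paper's architecture almost step for step: the affine bias/variance split with Minkowski, the one-step $\ell^2$ contraction for the bias followed by $\|\cdot\|_H^2\le R^2\|\cdot\|^2$, the autocovariance summation for the averaged variance reducing to $\Tr(\Phi_\infty)/(\gamma(T-t))$, and the Lyapunov trace identity $\tfrac{2}{\gamma}\Tr(\Phi_\infty)=\Tr(H^{-1}\Sigma)+\Tr(H^{-1}\widetilde\Sigma)$ are exactly what the paper does.

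The one place where your sketch is not yet a proof is the ``self-improving (geometric) estimate'' for the misspecification correction. Feeding $\Tr(H^{-1}\widetilde\Sigma)\le dR^2\|\Phi_\infty\|$ back into the \emph{trace} identity does not close the loop: you get $\Tr(\Phi_\infty)$ on the left but the operator norm $\|\Phi_\infty\|$ on the right, and no iteration ensues. A separate PSD bound $\Phi_\infty\preceq c\,\Id$ is required first. The paper obtains $c=\tfrac{\gamma\|\Sigma\|_H}{1-\gamma R^2}$ by introducing the auxiliary operator $\widetilde\cT\circ M:=HM+MH-\gamma HMH$, writing $\Phi_\infty\preceq\gamma\,\widetilde\cT^{-1}\circ\cS\circ\Phi_\infty+\gamma\|\Sigma\|_H\,\widetilde\cT^{-1}\circ H$, checking $\widetilde\cT^{-1}\circ H\preceq\Id$ and $\cS\circ\Id\preceq R^2H$, and iterating. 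A shorter route, closer to what you seem to intend, is to use the PSD fixed point $H\Phi_\infty+\Phi_\infty H=\gamma(\Sigma+\widetilde\Sigma)$ itself: since $\cT^{-1}\circ M=\int_0^\infty e^{-sH}Me^{-sH}\,ds$ is PSD-monotone and $\cT^{-1}\circ H=\tfrac12\Id$, the bounds $\Sigma\preceq\|\Sigma\|_H H$ and $\widetilde\Sigma\preceq R^2\|\Phi_\infty\|H$ give $\Phi_\infty\preceq\tfrac{\gamma}{2}\bigl(\|\Sigma\|_H+R^2\|\Phi_\infty\|\bigr)\Id$, hence $\|\Phi_\infty\|\le\tfrac{\gamma\|\Sigma\|_H}{2-\gamma R^2}$, which is slightly sharper than the paper's constant and then plugs into your $\Tr(H^{-1}\widetilde\Sigma)$ bound exactly as you describe.
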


The bias term (the first term) decays at a geometric rate (one can set
$t=T/2$ or maintain multiple running averages if $T$ is not known in
advance). If $\gamma = 1/(2R^2)$ and the model is well-specified
($\rho_{\mathrm{misspec}}=1$), then the variance term is
$2\sigma_{\mathrm{MLE}}/\sqrt{T-t}$, and the rate of the bias
contraction is $\mu/R^2$. If the model is not well specified, then
using a smaller stepsize of
$\gamma = 1/(2\rho_{\mathrm{misspec}} R^2)$, leads to the same minimax
optimal rate (up to a constant factor of 2), albeit at a slower bias
contraction rate.  In the mis-specified case, an example
in~\cite{JainKKNS16} shows that such a smaller stepsize is required in
order to be within a constant factor of the minimax rate.  An even
smaller stepsize leads to a constant even closer to that of the
optimal rate.

\section{Analysis}

The analysis first characterizes a bias/variance decomposition, where
the variance is bounded in terms of properties of the stationary
covariance of $w_t$. Then this asymptotic covariance matrix is analyzed. 

Throughout assume:
\begin{align*}
	\gamma < \frac{1}{R^2} \, .
\end{align*}

\subsection{The Bias-Variance Decomposition}
The gradient at $w^*$ in iteration $t$ is:
\[
\xi_t :=- (y_t - w^* \cdot x_t) x_t \, ,
\]
which is a mean $0$ quantity. Also define:
\[
B_t := \Id - x_t x_t^\top \, .
\]
The update rule can be written as:
\begin{align*}
w_t-w^* & = w_{t-1} -w^* + \gamma (y_t - w_{t-1} \cdot x_t) x_t\\
& = (\Id-\gamma x_t x_t^\top) (w_{t-1}-w^*) - \gamma \xi_t \\
&= B_t (w_{t-1}-w^*) - \gamma \xi_t  
\, .
\end{align*}
Roughly speaking, the above shows how the process on $w_t-w^*$
consists of a contraction along with an addition of a zero mean quantity.

From recursion,
\begin{align}
\label{eq:finalIterateExpansion}
w_t-w^* &= B_t \cdots B_1 (w_0-w^*) - \gamma \left(\xi_t  + 
  B_t\xi_{t-1} +\cdots+ B_t \cdots B_2 \xi_1\right) \,.
\end{align}

It is helpful to consider a certain bias and variance
decomposition. Let us write:
\[
\E[\|\overline{w}_{t:T} - w^*\|_H^2|\xi_0=\cdots =\xi_T=0]
: = 
\frac{1}{(T-t)^2} \E\left[\norm{\sum_{\tau=t}^{T-1}B_\tau \cdots B_1 (w_0-w^*)}^2_H \right]
\, .
\]
and
\[
\E[\|\overline{w}_{t:T} - w^*\|_H^2|w_0=w^*] 
= \left(\frac{\gamma}{T-t}\right)^2\cdot
\E\left[\norm{\sum_{\tau=t}^{T-1} \left(\xi_\tau  + B_\tau\xi_{\tau-1} +\cdots+ B_\tau \cdots B_2 \xi_1\right)}^2_H \right]
\]
(The first conditional expectation notation slightly abuses notation,
and should be taken as a definition\footnote{The abuse is due that the
right hand side drops the conditioning.}).

\begin{lemma}
The error is bounded as:
\begin{align*}
\E[L(\overline{w}_{t:T})] - L(w^*) 
\leq \frac{1}{2}
\bigg(
\sqrt{\E[\|\overline{w}_{t:T} - w^*\|_H^2|\xi_0=\cdots =\xi_T=0]} + 
\sqrt{\E[\|\overline{w}_{t:T} - w^*\|_H^2|w_0=w^*]}
\bigg)^2 \, .
\end{align*}
\end{lemma}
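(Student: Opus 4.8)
The plan is to collapse the excess risk onto the $H$-norm and then split the averaged iterate into a bias part and a variance part, combining them through two triangle inequalities. First I would record the exact identity $L(w) - L(w^*) = \tfrac{1}{2}\norm{w - w^*}_H^2$. This follows from the first-order optimality condition $\nabla L(w^*) = H w^* - \E[yx] = 0$: expanding $L(w)-L(w^*)$ about $w^*$, the linear term $(w-w^*)^\top(Hw^* - \E[yx])$ vanishes and the quadratic term is exactly $\tfrac12(w-w^*)^\top H(w-w^*)$. Taking $w = \overline{w}_{t:T}$ and expectations gives $\E[L(\overline{w}_{t:T})] - L(w^*) = \tfrac12\,\E[\norm{\overline{w}_{t:T} - w^*}_H^2]$, so it remains to bound a single mean-squared $H$-norm.

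Next I would use linearity of the recursion \eqref{eq:finalIterateExpansion} to write $w_\tau - w^* = u_\tau + v_\tau$, where $u_\tau := B_\tau\cdots B_1(w_0 - w^*)$ is the bias process (the solution driven by the initial error with all $\xi$ zeroed) and $v_\tau := -\gamma(\xi_\tau + B_\tau\xi_{\tau-1} + \cdots + B_\tau\cdots B_2\xi_1)$ is the variance process (the solution started at $w_0 = w^*$ and driven by the noise). One checks directly that $u_\tau + v_\tau$ obeys the recursion $w_\tau - w^* = B_\tau(w_{\tau-1} - w^*) - \gamma\xi_\tau$ with the correct initial condition, so the splitting is exact. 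Averaging over $\tau = t,\dots,T-1$ gives $\overline{w}_{t:T} - w^* = \overline u + \overline v$ with $\overline u := \tfrac{1}{T-t}\sum_\tau u_\tau$ and $\overline v := \tfrac{1}{T-t}\sum_\tau v_\tau$; by construction $\E[\norm{\overline u}_H^2]$ and $\E[\norm{\overline v}_H^2]$ are exactly the two conditional expectations appearing in the statement.

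Finally I would combine the two pieces. Pointwise, i.e. for every realization of the samples, the $H$-norm obeys $\norm{\overline u + \overline v}_H \le \norm{\overline u}_H + \norm{\overline v}_H$ since $H \succ 0$; squaring and taking expectations, Minkowski's inequality in $L^2$ over the randomness yields $\E[\norm{\overline u + \overline v}_H^2] \le (\sqrt{\E[\norm{\overline u}_H^2]} + \sqrt{\E[\norm{\overline v}_H^2]})^2$. Multiplying by $\tfrac12$ and substituting the identity from the first step produces the claimed bound.

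The proof has no hard estimate; the one point to get right is conceptual. Computing $\E[\norm{\overline u + \overline v}_H^2]$ exactly produces a cross term $\E[\langle\overline u,\overline v\rangle_H]$, and one might hope it vanishes by mean-zero-ness of the noise. But $\xi_\tau$ is only \emph{unconditionally} mean zero (by optimality of $w^*$); under misspecification $\E[\xi_\tau\mid x_\tau]$ need not be zero, and $\overline u,\overline v$ share the same samples $x_1,\dots,x_{T-1}$, so the cross term is genuinely entangled and awkward to sign. The value of routing the argument through the deterministic triangle inequality followed by Minkowski is precisely that it upper-bounds the cross term without ever evaluating it, at the cost of turning a sum into a sum of square roots -- which is exactly the form asserted by the lemma.
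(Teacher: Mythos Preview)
Your proposal is correct and takes essentially the same approach as the paper: decompose $\overline{w}_{t:T}-w^*$ into the bias and variance pieces coming from the linear recursion, then control the cross term by Cauchy--Schwarz/Minkowski rather than trying to show it vanishes. The paper phrases the last step as a single application of Cauchy--Schwarz to $\E[u^\top H v]$ (i.e., Minkowski in the Hilbert space of random vectors with inner product $\E[u^\top H v]$) rather than your pointwise triangle inequality followed by scalar $L^2$ Minkowski, but the two routes are equivalent.
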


\begin{proof}
Equation~\ref{eq:finalIterateExpansion} implies that:
\begin{align*}
\overline{w}_{t:T}-w^* = \frac{1}{T-t}\sum_{\tau=t}^{T-1}B_\tau \cdots B_1 (w_0-w^*) - \frac{\gamma}{T-t}\sum_{\tau=t}^{T-1} \left(\xi_\tau  + B_\tau\xi_{\tau-1} +\cdots+ B_\tau \cdots B_2 \xi_1\right) \,.
\end{align*}
Now observe that for vector valued random variables $u \text{ and } v$, $(\E
u\T H v)^2 \le \E[\|u\|_H^2] \E[\|v\|_H^2]$ implies
\[
	\E \|u+v\|_H^2  \le \big(\sqrt{\E\|u\|_H^2 } + \sqrt{\E\|v\|_H^2}  \big)^2 \,,
\]
the proof of the lemma follows by noting that $\E[L(\overline{w}_{t:T}) - L(w^*)] = \frac 1 2 \E \|\overline{w}_{t:T} - w^*\|_H^2$.
\end{proof}

\noindent
{\bf Bias.\/}
The bias term is characterized as follows:

\begin{lemma}
For all $t$,
\[
\E[\|w_t - w^*\|^2|\xi_0=\cdots =\xi_T=0] \leq \exp(-\gamma \mu t) \|w_0 - w^*\|^2  \, .
\]
\end{lemma}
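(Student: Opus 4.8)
The plan is to exploit the fact that, conditioned on all noise terms $\xi_\tau$ being zero, the recursion in Equation~\ref{eq:finalIterateExpansion} collapses to a purely multiplicative process. Writing $\delta_t := w_t - w^*$ for the noiseless iterate, the update becomes $\delta_t = B_t \delta_{t-1}$ with $B_t = \Id - \gamma x_t x_t^\top$, so that $\|\delta_t\|^2 = \delta_{t-1}\T B_t\T B_t \delta_{t-1}$. Since $x_t$ is drawn i.i.d.\ and is therefore independent of $\delta_{t-1}$ (which is a function of $x_1,\dots,x_{t-1}$ only), I would first take the conditional expectation over $x_t$ to obtain $\E[\|\delta_t\|^2 \mid \delta_{t-1}] = \delta_{t-1}\T \E[B_t\T B_t] \delta_{t-1}$.

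The core computation is to bound the matrix $\E[B_t\T B_t]$ in the Loewner order. Since $B_t$ is symmetric, $B_t\T B_t = (\Id - \gamma x_t x_t^\top)^2 = \Id - 2\gamma x_t x_t^\top + \gamma^2 \|x_t\|^2 x_t x_t^\top$. Taking expectations and using $H = \E[x x^\top]$ together with the fourth-moment assumption $\E[\|x\|^2 x x^\top] \preceq R^2 H$ yields
\[
\E[B_t\T B_t] \preceq \Id - 2\gamma H + \gamma^2 R^2 H = \Id - \gamma(2 - \gamma R^2) H \, .
\]
Here is where the stepsize restriction $\gamma < 1/R^2$ enters: it guarantees $2 - \gamma R^2 > 1$, hence $\E[B_t\T B_t] \preceq \Id - \gamma H$. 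Invoking $H \succeq \mu \Id$ then gives the clean one-step contraction $\E[B_t\T B_t] \preceq (1 - \gamma\mu)\Id$.

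With the one-step contraction in hand, I would conclude by iterating: $\E[\|\delta_t\|^2 \mid \delta_{t-1}] \leq (1 - \gamma\mu)\|\delta_{t-1}\|^2$, and taking the tower expectation over the randomness in $x_1,\dots,x_{t-1}$ gives $\E[\|\delta_t\|^2] \leq (1 - \gamma\mu)^t \|\delta_0\|^2$. The stated bound then follows from $1 - \gamma\mu \leq \exp(-\gamma\mu)$. The only subtle point, and the step I expect to require the most care, is the matrix inequality controlling $\E[\|x_t\|^2 x_t x_t^\top]$ via the definition of $R^2$; everything else is a routine expansion and iteration. It is worth noting that one only uses the factor $2 - \gamma R^2$ to the extent of dominating $1$—retaining the full factor would give a marginally faster but less transparent rate, so discarding it keeps the final exponent clean.
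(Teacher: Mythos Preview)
Your proposal is correct and follows essentially the same argument as the paper: expand $\|B_t\delta_{t-1}\|^2$, apply the fourth-moment bound $\E[\|x\|^2 xx^\top]\preceq R^2 H$, use $\gamma<1/R^2$ to reduce the coefficient $(2-\gamma R^2)$ to $1$, and then invoke $H\succeq\mu\Id$ to obtain the one-step contraction $(1-\gamma\mu)$ before iterating. The only cosmetic difference is that you phrase the computation as a Loewner bound on $\E[B_t^\top B_t]$ whereas the paper works directly with the quadratic form in $w_{t-1}-w^*$.
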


\begin{proof}
Assume $\xi_t=0$ for all $t$. Observe:
\begin{eqnarray*}
\E\|w_t - w^*\|^2
 & = &\E\|w_{t-1} - w^*\|^2 - 
2 \gamma (w_{t-1} -w^*)^\top \E[ xx^\top]  (w_{t-1} -w^*)\\
&&+\gamma^2 (w_{t-1} -w^*)^\top \E[\|x\|^2 xx^\top] (w_{t-1} -w^*)\\
& \leq &\E\|w_{t-1} - w^*\|^2 - 
2 \gamma (w_{t-1} -w^*)^\top H  (w_{t-1} -w^*)\\
&&+\gamma^2 R^2 (w_{t-1} -w^*)^\top H (w_{t-1} -w^*)\\
& \leq &\E\|w_{t-1} - w^*\|^2 -\gamma\E\|w_{t-1} - w^*\|_H^2 \\
& \leq &(1-\gamma\mu)\E\|w_{t-1} - w^*\|^2 \, ,
\end{eqnarray*}
which completes the proof.
\end{proof}

\noindent
{\bf Variance.\/}
Now suppose $w_0=w^*$. Define the covariance matrix:
\[
C_t:=\E[(w_t - w^*) (w_t - w^*)^\top| w_0 = w^*]
\]
Using the recursion, $w_t-w^* = B_t (w_{t-1}-w^*) + \gamma \xi_t $,
\begin{equation}\label{eq:cov_update}
C_{t+1} = C_t - \gamma H C_t - \gamma C_t H 
+ \gamma^2 \E[(x^\top C_t x) x x^\top] + \gamma^2 \Sigma
\end{equation}
which follows from:
\[
\E[(w_t - w^*) \xi_{t+1}^\top]=0 \, , \textrm{ and } \, \,
\E[(x_{t+1} x_{t+1}^\top) (w_t - w^*) \xi_{t+1}^\top] = 0
\]
(these hold since $w_t - w^*$ is mean $0$ and both $x_{t+1}$ and $\xi_{t+1}$
are independent of $w_t - w^*$).

\begin{lemma} \label{lem:cov_infty}
Suppose $w_0=w^*$. There exists a unique $C_\infty$ such that:
\[
0 = C_0 \preceq C_1 \preceq \cdots \preceq C_\infty
\]
where $C_\infty$ satisfies:
\begin{equation}\label{eq:cov_infty}
C_\infty = C_\infty - \gamma H C_\infty - \gamma C_\infty H 
+ \gamma^2 \E[(x^\top C_\infty x) x x^\top] + \gamma^2 \Sigma \, .
\end{equation}
\end{lemma}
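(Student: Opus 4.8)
The plan is to recognize the right-hand side of the update \eqref{eq:cov_update} as a single linear operator applied to $C_t$ plus a fixed PSD offset, and then run a monotone-convergence argument on the resulting iteration. Concretely, I would define the operator $\cT$ on symmetric matrices by $\cT(M) := \E[(\Id - \gamma x x^\top) M (\Id - \gamma x x^\top)]$; expanding the product and using $\E[(x^\top M x)xx^\top] = \E[xx^\top M xx^\top]$ shows that $\cT(M) = M - \gamma HM - \gamma MH + \gamma^2\E[(x^\top M x)xx^\top]$, so \eqref{eq:cov_update} becomes $C_{t+1} = \cT(C_t) + \gamma^2\Sigma$ with $C_0 = 0$. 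Two structural facts about $\cT$ drive everything. First, $\cT$ is monotone: since $(\Id-\gamma xx^\top)M(\Id-\gamma xx^\top) \succeq 0$ pointwise whenever $M \succeq 0$, linearity gives $M \preceq N \Rightarrow \cT(M)\preceq \cT(N)$. Second, $\cT$ contracts the trace on the PSD cone: for $M\succeq 0$ the bound $\E[\|x\|^2 xx^\top]\preceq R^2 H$ yields $\Tr(\E[(x^\top M x)xx^\top]) = \Tr(M\,\E[\|x\|^2 xx^\top]) \le R^2\Tr(HM)$, and combining with $\Tr(HM)\ge \mu\Tr(M)$ gives $\Tr(\cT(M)) \le \rho\,\Tr(M)$ where $\rho := 1 - \gamma\mu(2-\gamma R^2)$. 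A short check that $\gamma\mapsto \gamma\mu(2-\gamma R^2)$ is maximized at $\gamma = 1/R^2$ with value $\mu/R^2 \le 1$ shows $\rho\in[0,1)$ for all $\gamma<1/R^2$.

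With these in hand, monotonicity of the sequence follows by induction: $C_1 - C_0 = \gamma^2\Sigma\succeq 0$, and $C_{t+1}-C_t = \cT(C_t) - \cT(C_{t-1}) = \cT(C_t - C_{t-1})\succeq 0$ once $C_t - C_{t-1}\succeq 0$, so $0 = C_0\preceq C_1\preceq\cdots$. For boundedness I would unroll the recursion to $C_t = \sum_{k=0}^{t-1}\cT^k(\gamma^2\Sigma)$, each summand PSD, and apply the trace bound iteratively to obtain $\Tr(C_t) \le \gamma^2\Tr(\Sigma)\sum_{k=0}^{t-1}\rho^k \le \gamma^2\Tr(\Sigma)/(1-\rho)$. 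Since the $C_t$ are PSD, $\|C_t\|\le \Tr(C_t)$ is uniformly bounded, so for every vector $v$ the scalar sequence $v^\top C_t v$ is nondecreasing and bounded, hence convergent; by polarization every entry of $C_t$ converges, giving a symmetric PSD limit $C_\infty$ with $C_t\preceq C_\infty$ for all $t$. Passing to the limit in $C_{t+1} = \cT(C_t)+\gamma^2\Sigma$, using continuity of the linear map $\cT$, yields the fixed-point equation \eqref{eq:cov_infty}.

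For uniqueness I would show $\cT^k(M)\to 0$ for every symmetric $M$, which forces the fixed point of $\Id - \cT$ to be unique. Writing an arbitrary symmetric $M = M_+ - M_-$ as a difference of PSD matrices and applying the trace contraction to each part gives $\Tr(\cT^k(M_\pm))\le \rho^k\Tr(M_\pm)\to 0$; since $\cT^k(M_\pm)\succeq 0$ and its trace vanishes, $\cT^k(M_\pm)\to 0$, whence $\cT^k(M)\to 0$. Thus if $C_\infty$ and $C_\infty'$ both solve \eqref{eq:cov_infty}, their difference $D$ satisfies $D = \cT(D) = \cT^k(D)\to 0$, so $D = 0$.

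The step I expect to be the main obstacle is pinning down the contraction constant and verifying $\rho\in[0,1)$ cleanly: this is where the assumption $\gamma < 1/R^2$ and the moment bound $\E[\|x\|^2xx^\top]\preceq R^2 H$ must be combined just so, and it is also what makes both the geometric trace bound (for boundedness and convergence) and the operator decay $\cT^k\to 0$ (for uniqueness) go through. The remaining pieces, namely monotone convergence of PSD matrices and passing the continuous map $\cT$ through the limit, are routine once $\rho<1$ is in place.
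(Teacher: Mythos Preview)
Your proposal is correct and follows essentially the same route as the paper: both obtain monotonicity by writing $C_t$ as a growing sum of PSD increments (your $\cT^k(\gamma^2\Sigma)$ coincide with the paper's $\gamma^2\E[B_k\cdots B_1\Sigma B_1^\top\cdots B_k^\top]$), bound $\Tr(C_t)$ via the same trace computation using $\E[\|x\|^2xx^\top]\preceq R^2H$ and $\gamma<1/R^2$, and then invoke monotone convergence. Your treatment is in fact slightly more complete, since you also supply a uniqueness argument via $\cT^k\to 0$ on symmetric matrices, which the paper's own proof omits despite the lemma's statement.
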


\begin{proof}
By recursion,
\begin{eqnarray*}
w_t - w^* & = & B_t (w_{t-1} - w^*) + \gamma \xi_t\\
& = & \gamma 
\left(\xi_t + B_t\xi_{t-1} +\cdots+ B_t \cdots B_2 \xi_1\right)\, .
\end{eqnarray*}
Using that $\xi_t$ is mean zero and independent of $B_{t'}$ and $\xi_{t'}$ for $t<t'$,
\begin{eqnarray*}
C_t = \gamma^2 \left(\E[\xi_t \xi_t^\top] +\E[B_t \xi_{t-1} \xi_{t-1}^\top  B_t]+\cdots
+ \E[B_t \cdots B_2 \xi_1 \xi_1^\top  B_2^\top\cdots B_t^\top]
        \right)
\end{eqnarray*}
Now using that $\E[\xi_1 \xi_1^\top]=\Sigma$ and that $\xi_t$ and
$B_{t'}$ are independent (for $t\neq t'$),
\begin{eqnarray*}
C_t &=& \gamma^2 \left(\Sigma+\E[B_2 \Sigma  B_2]+\cdots
+ \E[B_t \cdots B_2 \Sigma  B_2^\top\cdots B_t^\top] \right)\\
&=& C_{t-1}+ \gamma^2\E[B_t \cdots B_2 \Sigma B_2^\top\cdots B_t^\top] \nonumber
\end{eqnarray*}
which proves $C_{t-1} \preceq C_t$. 

To prove the limit exists, it suffices to first argue the trace of $C_t$ is
uniformly bounded from above, for all $t$. By taking the trace of update rule,
Equation~\ref{eq:cov_update}, for $C_t$, 
\[ 
\Tr(C_{t+1}) = \Tr(C_t)-2\gamma \Tr(HC_t)+\gamma^2 \Tr(\E[(x^\top
C_t x) x x^\top]) + \gamma^2 \Tr(\Sigma) \, . 
\]
Observe:
\begin{equation}\label{eq:cov_bound}
\Tr(\E[(x^\top C_t x) x x^\top]) = \Tr(\E[(x^\top C_t x) \|x\|^2])
= \Tr(C_t \E[\|x\|^2 x x^\top]) \leq R^2\Tr(C_t H) 
\end{equation}
and, using $\gamma\leq 1/R^2$,
\[ 
\Tr(C_{t+1}) \leq \Tr(C_t)-\gamma \Tr(HC_t) + \gamma^2 \Tr(\Sigma)
\leq (1-\gamma \mu)\Tr(C_t) + \gamma^2 \Tr(\Sigma) \leq \frac{\gamma
  \Tr(\Sigma)}{\mu} \, . 
\]
proving the uniform boundedness of the trace of $C_t$.
Now, for any fixed $v$, the limit of $v^\top C_t v$ exists, by the 
monotone convergence theorem. From this, it follows that every entry
of the matrix $C_t$ converges.  
\end{proof}

\begin{lemma}
Define:
\begin{align*}
	\overline{w}_T &:= \frac{1}{T} \sum_{t=0}^{T-1} w_t \, .
\end{align*}
and so:
\begin{align*}
\frac{1}{2} \E [\| \overline{w}_T - w^* \|_H^2 | w_0=w^*]
&\leq \frac{\Tr(C_\infty)}{\gamma T} 
\end{align*}
\end{lemma}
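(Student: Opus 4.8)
The plan is to expand the squared $H$-norm of the average of the centered iterates $\delta_t := w_t - w^*$ (note $\delta_0 = 0$ since $w_0 = w^*$) into diagonal and off-diagonal contributions, and to control each off-diagonal (cross-covariance) term using the recursion together with the independence of fresh samples from the past. Since $\overline{w}_T - w^* = \frac{1}{T}\sum_{t=0}^{T-1}\delta_t$, I would write $\frac{1}{2}\E[\|\overline{w}_T - w^*\|_H^2 \mid w_0 = w^*] = \frac{1}{2T^2}\sum_{t,s=0}^{T-1}\E[\delta_t\T H \delta_s]$, and use that $\E[\delta_t\T H \delta_s] = \Tr\!\big(H\,\E[\delta_s\delta_t\T]\big)$ is symmetric in $(t,s)$. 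This reduces the task to understanding the cross-covariances $\E[\delta_s\delta_t\T]$ for $s \ge t$.

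For $s \ge t$, unrolling $\delta_s = B_s\cdots B_{t+1}\delta_t + \gamma(\xi_s + \cdots + B_s\cdots B_{t+2}\xi_{t+1})$ and noting that every noise term carries an index strictly above $t$ (hence is mean zero and independent of $\delta_t$), the cross terms collapse to $\E[\delta_s\delta_t\T] = \E[B_s]\cdots\E[B_{t+1}]\,C_t = (\Id - \gamma H)^{s-t}C_t$, using $\E[B_\tau] = \Id - \gamma H$ and mutual independence of the $B_\tau$. Substituting, the full double sum becomes $\sum_t \Tr(HC_t) + 2\sum_t\sum_{s>t}\Tr\!\big(H(\Id-\gamma H)^{s-t}C_t\big)$, so the key computation is the matrix geometric series $\sum_{k\ge 1}(\Id-\gamma H)^k$, which I would upper-bound by its infinite sum $(\gamma H)^{-1} - \Id$; multiplying by $H$ then gives $H\sum_{k\ge 1}(\Id-\gamma H)^k \preceq \tfrac{1}{\gamma}\Id - H$.

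Combining the two pieces and using that $\Tr(MC_t) \le \Tr(M'C_t)$ whenever $M \preceq M'$ and $C_t \succeq 0$ (valid since the partial sum $H\sum_{k=1}^{T-1-t}(\Id-\gamma H)^k \preceq \tfrac{1}{\gamma}\Id - H$ and each $C_t \succeq 0$), each fixed-$t$ contribution telescopes to $\Tr\!\big((\tfrac{2}{\gamma}\Id - H)C_t\big)$. Finally I would bound $\tfrac{2}{\gamma}\Id - H \preceq \tfrac{2}{\gamma}\Id$ and invoke the monotonicity $C_t \preceq C_\infty$ from Lemma~\ref{lem:cov_infty} to obtain $\Tr\!\big((\tfrac{2}{\gamma}\Id - H)C_t\big) \le \tfrac{2}{\gamma}\Tr(C_\infty)$; summing over the $T$ indices $t$ and dividing by $2T^2$ yields exactly $\Tr(C_\infty)/(\gamma T)$.

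The main obstacle is the cross-covariance bookkeeping: correctly identifying that for $s > t$ the only surviving contribution to $\E[\delta_s\delta_t\T]$ is the pure contraction $(\Id-\gamma H)^{s-t}C_t$, and then upper-bounding the partial matrix geometric series by the full series $(\gamma H)^{-1}$ and converting this into the clean scalar factor $2/\gamma$. The remaining steps (positivity and monotonicity of traces, and $C_t \preceq C_\infty$) are routine given the earlier lemmas.
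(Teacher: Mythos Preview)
Your proposal is correct and follows essentially the same route as the paper: expand the double sum over pairs of iterates, reduce the cross terms for $s\ge t$ to $(\Id-\gamma H)^{s-t}C_t$ via the recursion and independence, sum the resulting matrix geometric series to $(\gamma H)^{-1}$, and finish with $C_t\preceq C_\infty$. The only cosmetic difference is that the paper double-counts the diagonal (bounding the symmetrized sum by $\sum_t\sum_{\tau\ge 0}2\Tr(H(\Id-\gamma H)^\tau C_t)$) whereas you separate the $k=0$ term and arrive at the slightly sharper intermediate quantity $\Tr\big((\tfrac{2}{\gamma}\Id-H)C_t\big)$ before discarding the $-H$; both collapse to the same bound $\tfrac{2}{\gamma}\Tr(C_\infty)$ per index $t$.
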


\begin{proof}
Note
	\begin{align*}
		\E[ (\overline w_T - w^*) (\overline w_T - w^*)\T | w_0 = w^*] 
			=& \frac{1}{T^2} \sum_{t=0}^{T-1} \sum_{t'=0}^{T-1} \E[ (w_t - w^*) (w_{t'} - w^*)\T | w_0 = w^* ] \\
			\preceq & \frac{1}{T^2} \sum_{t=0}^{T-1} \sum_{t'=t}^{T-1}  
				\bigg( \E[ (w_t - w^*) (w_{t'} - w^*)\T  | w_0 = w^* ] + \\
				& \qquad \qquad \qquad \E[ (w_{t'} - w^*) (w_t - w^*)\T  | w_0 = w^* ] 
				\bigg) \, ,
	\end{align*}
	double counting the diagonal terms $\E[ (w_t - w^*) (w_t - w^*)\T  | w_0 = w^* ] \succeq 0$.
	For $t \le t'$, $\E[ (w_{t'} - w^*) | w_0 = w^* ] = (\Id - \gamma  H)^{t'-t} \E[ (w_t - w^*)   | w_0 = w^* ]$.
	To see why, consider the recursion $w_t - w^* = (\Id - \gamma x_t x_t\T) (w_{t-1} - w^*) - \gamma\xi_t$
	and take expectations to get $\E[w_t - w^* | w_0 = w^*]  = (\Id - \gamma H) \E[ w_{t-1} - w^*  | w_0 = w^* ]$ since the sample $x_t$
	 is independent of the $w_{t-1}$.
	From this, 
	\begin{align*}
		\E[ (\overline w_T - w^*) (\overline w_T - w^*)\T | w_0 = w^*] 
			&\preceq \frac{1}{T^2} \sum_{t=0}^{T-1} \sum_{\tau = 0}^{T-t-1} (I - \gamma H)^\tau C_t + C_t (\Id - \gamma H)^\tau \, ,
	\end{align*}
	and so,
	\begin{align*}
		\E [\| \overline{w}_T - w^* \|_H^2  |  w_0 = w^*] 
			&= \Tr \big( H \E[ (\overline w_T - w^*) (\overline w_T - w^*)\T | w_0 = w^*] \big) \\
			&\le \frac{1}{T^2} \sum_{t=0}^{T-1}\sum_{\tau = 0}^{T-t-1}  \Tr \big( H (\Id - \gamma H)^\tau C_t \big) 
				+ \Tr \big( C_t (\Id - \gamma H)^\tau H \big) \, .
	\end{align*}
	Notice that $H(\Id-\gamma H)^\tau  = (\Id - \gamma H)^\tau H$ for any non-negative integer $\tau$. 
	Since $H \succ 0$ and $I - \gamma H \succeq 0$, $H(\Id - \gamma H)^\tau \succeq 0$
	because the product of two commuting PSD matrices is PSD. Also note that for PSD matrices $A, B$, 
	$\Tr AB \ge 0$. Hence,
	\begin{align*}
		\E [\| \overline{w}_T - w^* \|_H^2  |  w_0 = w^*] 
			&\le \frac{2}{T^2} \sum_{t=0}^{T-1} \sum_{\tau = 0}^{\infty} \Tr\big( H  (\Id - \gamma H)^\tau C_t \big)  & \\
			&= \frac{2}{T^2} \sum_{t=0}^{T-1}  \Tr\big( H  (\sum_{\tau = 0}^{\infty} (\Id - \gamma H)^\tau) C_t \big)  & \\
			&= \frac{2}{T^2} \sum_{t=0}^{T-1}  \Tr\big( H  (\gamma H)^{-1} C_t \big)  & (*) \\
			&= \frac{2}{\gamma T^2} \sum_{t=0}^{T-1}  \Tr (C_t)  & \\
			&\le \frac{2}{\gamma T} \cdot \Tr (C_\infty) \,,
	\end{align*}
	from lemma~\ref{lem:cov_infty} where $(*)$ followed from 
	\begin{align*}
		(\gamma H)^{-1} = ( \Id -  (\Id - \gamma H))^{-1} = \sum_{\tau = 0}^\infty (\Id - \gamma H)^\tau \, ,
	\end{align*}
	and the series converges because $\Id - \gamma H \prec \Id$.
\end{proof}

\subsection{Stationary Distribution Analysis}

Define two linear operators on symmetric matrices, $\cS$ and
$\cT$ --- where $\cS$ and
$\cT$ can be viewed as matrices acting on ${d+1 \choose 2}$
dimensions --- as follows:
\[
\cS \circ M := \E[(x^\top M x) x x^\top] \, , \quad \quad \cT \circ M :=
H M + M H \, .
\]
With this, $C_\infty$ is the solution to:
\begin{equation}\label{C:def}
\cT \circ C_\infty = \gamma \cS \circ C_\infty +\gamma \Sigma
\end{equation}
(due to Equation~\ref{eq:cov_infty}).

\begin{lemma}\label{lemma:crude}
(Crude $C_\infty$ bound) $C_\infty$  is bounded as:
\[
C_\infty \preceq \frac{\gamma \|\Sigma\|_H}{1-\gamma R^2} \, \Id \, .
\]
\end{lemma}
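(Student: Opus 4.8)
The plan is to work directly from the fixed-point characterization~\eqref{C:def}, namely $\cT \circ C_\infty = \gamma\, \cS \circ C_\infty + \gamma \Sigma$, i.e. $H C_\infty + C_\infty H = \gamma\, \cS \circ C_\infty + \gamma \Sigma$, and to control the right-hand side by two elementary pointwise bounds in the positive semidefinite order. Recall from Lemma~\ref{lem:cov_infty} that $C_\infty \succeq 0$, so these bounds apply to it.

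First I would record the two bounds. For the noise term, the definition of the induced norm gives $H^{-1/2}\Sigma H^{-1/2} \preceq \|\Sigma\|_H\, \Id$, hence $\Sigma \preceq \|\Sigma\|_H\, H$ after conjugating by $H^{1/2}$. For the operator $\cS$, I would use that for any $M \succeq 0$ the scalar inequality $x^\top M x \le \|M\|\,\|x\|^2$ holds, so that $(x^\top M x)\, x x^\top \preceq \|M\|\,\|x\|^2 x x^\top$ (multiplying a scalar inequality by the PSD matrix $x x^\top$); taking expectations and invoking the fourth-moment assumption $\E[\|x\|^2 x x^\top] \preceq R^2 H$ yields $\cS \circ M \preceq \|M\|\, R^2 H$. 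Applying both to $C_\infty$ and substituting into the fixed-point equation gives
\[
H C_\infty + C_\infty H \;\preceq\; \gamma\big(\|C_\infty\|\, R^2 + \|\Sigma\|_H\big)\, H \, .
\]

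The main obstacle is that the left-hand side is a Lyapunov-type expression $H C_\infty + C_\infty H$ rather than a multiple of $C_\infty$, and $H$ and $C_\infty$ need not commute, so one cannot immediately read off a bound on $C_\infty$. I would resolve this by testing the displayed inequality against a unit top eigenvector $v$ of $C_\infty$, i.e. $C_\infty v = \lambda v$ with $\lambda = \|C_\infty\|$. Because $v$ is an eigenvector, $v^\top(H C_\infty + C_\infty H)v = 2\lambda\, (v^\top H v)$, while the right-hand side evaluates to $\gamma(\lambda R^2 + \|\Sigma\|_H)(v^\top H v)$. Since $v^\top H v \ge \mu > 0$, dividing through cancels the common factor and leaves the scalar inequality $2\lambda \le \gamma \lambda R^2 + \gamma \|\Sigma\|_H$, i.e. $\lambda \le \gamma\|\Sigma\|_H / (2 - \gamma R^2)$. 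Finally $C_\infty \preceq \|C_\infty\|\,\Id = \lambda \Id$, and since $2 - \gamma R^2 \ge 1 - \gamma R^2 > 0$ under $\gamma < 1/R^2$, this is bounded by $\tfrac{\gamma\|\Sigma\|_H}{1-\gamma R^2}\Id$, giving the claim (in fact with a slightly sharper constant).
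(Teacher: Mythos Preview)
Your argument is correct and follows a genuinely different route from the paper. The paper introduces the auxiliary operator $\widetilde\cT\circ M = HM+MH-\gamma HMH$, writes its inverse explicitly as $\widetilde\cT^{-1}\circ M=\gamma\sum_{t\ge0}(\Id-\gamma H)^t M(\Id-\gamma H)^t$, and then unrolls the fixed-point equation into a Neumann series $C_\infty\preceq\gamma\|\Sigma\|_H\sum_{t\ge0}(\gamma\widetilde\cT^{-1}\circ\cS)^t\circ\widetilde\cT^{-1}\circ H$, which is summed using $\widetilde\cT^{-1}\circ H\preceq\Id$ and $\cS\circ\Id\preceq R^2 H$. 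Your approach instead bounds the right-hand side of $HC_\infty+C_\infty H=\gamma\cS\circ C_\infty+\gamma\Sigma$ by a multiple of $H$ and then bypasses the Lyapunov structure by evaluating both sides on a top eigenvector of $C_\infty$; because $C_\infty v=\lambda v$, the cross terms collapse to $2\lambda\,v^\top Hv$, the positive factor $v^\top Hv$ cancels, and you obtain the scalar inequality $\lambda\le\gamma\|\Sigma\|_H/(2-\gamma R^2)$. This is considerably more elementary, avoids the operator series machinery entirely, and even yields the sharper denominator $2-\gamma R^2$. The paper's operator viewpoint, on the other hand, gives a PSD-monotone solution map and a full series representation, which is conceptually informative and could be reused for finer structural statements about $C_\infty$; for the bare bound in this lemma, however, your eigenvector argument is the cleaner path. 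One minor point worth making explicit: you are implicitly using that $\|C_\infty\|<\infty$, which is guaranteed by Lemma~\ref{lem:cov_infty} (the trace bound there implies finiteness of the top eigenvalue).
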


\begin{proof}
Define one more linear operator as follows:
\[
\widetilde \cT \circ M := \cT \circ M - \gamma H M H = H M + M H - \gamma H M H \,.
\]
The inverse of this operator can be written as:
\[
\widetilde \cT^{-1} \circ M =\gamma \sum_{t=0}^\infty (\Id -
\gamma\widetilde \cT)^t \circ M
= \gamma \sum_{t=0}^\infty
(\Id-\gamma H)^t M (\Id-\gamma H)^t \, .
\]
which exists since the sum converges due to fact that $0\preceq\Id-\gamma H \prec
\Id$.  

A few inequalities are helpful: If $0\preceq M \preceq M'$,
then
\begin{equation}\label{eq:1}
0\preceq \widetilde \cT^{-1} \circ M \preceq \widetilde \cT^{-1} \circ
M' \, ,
\end{equation}
since
\[ 
\widetilde \cT^{-1} \circ M = 
\gamma \sum_{t=0}^\infty (\Id-\gamma
H)^t M (\Id-\gamma H)^t \preceq 
\gamma \sum_{t=0}^\infty (\Id-\gamma
H)^t M' (\Id-\gamma H)^t
= \widetilde \cT^{-1} \circ M' \, ,
\]
(which follows since $0\preceq \Id-\gamma H$).  Also, if $0\preceq M \preceq M'$,
then 
\begin{equation}\label{eq:2}
0 \preceq \cS \circ M \preceq \cS \circ M' \, ,
\end{equation}
which implies:
\begin{equation}\label{eq:3}
0 \preceq \widetilde \cT^{-1} \circ \cS \circ M \preceq \widetilde \cT^{-1}
\circ \cS \circ M'  \, .
\end{equation}
The following inequality is also of use:
\[
\Sigma \preceq \|H^{-1/2} \Sigma
H^{-1/2}\| H = \|\Sigma\|_H H \, .
\]

By definition of $\widetilde \cT$, 
\[
\widetilde \cT \circ C_\infty = \gamma \cS \circ C_\infty +\gamma
\Sigma - \gamma H C_\infty H \, .
\]
Using this and Equation~\ref{eq:1},
\begin{eqnarray*}
C_\infty
&=& \gamma \widetilde \cT^{-1} \circ \cS \circ C_\infty
+\gamma \widetilde \cT^{-1} \circ \Sigma - \gamma \widetilde \cT^{-1} \circ( H
    C_\infty H)\\
&\preceq& \gamma \widetilde \cT^{-1} \circ \cS \circ C_\infty
+\gamma \widetilde\cT^{-1} \circ \Sigma\\
&\preceq & \gamma \widetilde \cT^{-1} \circ \cS \circ C_\infty
+\gamma \|\Sigma\|_H \widetilde \cT^{-1} \circ H \, .
\end{eqnarray*}
Proceeding recursively by using Equation~\ref{eq:3}, 
\begin{eqnarray*}
C_\infty
&\preceq & (\gamma \widetilde \cT^{-1} \circ \cS)^2 \circ C_\infty 
+ \gamma \|\Sigma\|_H (\gamma \widetilde \cT^{-1} \circ \cS ) \circ \widetilde \cT^{-1} \circ H 
+\gamma \|\Sigma\|_H \widetilde \cT^{-1} \circ H\\
&\preceq&  \gamma \|\Sigma\|_H \sum_{t=0}^\infty (\gamma \widetilde \cT^{-1} \circ \cS)^t \circ
\widetilde \cT^{-1} \circ H \, .
\end{eqnarray*}

Using
\begin{align*}
\cS \circ \Id & \preceq  R^2 H 
\end{align*}
and
\begin{align*}
& \widetilde \cT^{-1} \circ H \\
& = 
\gamma \sum_{t=0}^\infty
(\Id-\gamma H)^{2t} H 
= \gamma \sum_{t=0}^\infty
(\Id-\gamma 2 H +\gamma^2 H)^t H 
\preceq \gamma \sum_{t=0}^\infty
(\Id-\gamma H)^t H 
= \gamma (\gamma H)^{-1} H
= \Id
\end{align*}
leads to
\[
C_\infty \preceq \gamma \|\Sigma\|_H \sum_{t=0}^\infty (\gamma R^2)^t
\Id = \frac{\gamma \|\Sigma\|_H}{1-\gamma R^2} \, \Id \, ,
\]
which completes the proof.
\end{proof}

\begin{lemma}
(Refined $C_\infty$ bound) The $\Tr(C_\infty) $  is bounded as:
\[
\Tr(C_\infty) \leq 
\frac{\gamma}{2}\Tr(H^{-1} \Sigma ) +
\frac{1}{2} \, \frac{\gamma^2R^2}{1-\gamma R^2} \, d \|\Sigma\|_H 
\]
\end{lemma}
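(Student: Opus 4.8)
The plan is to extract $\Tr(C_\infty)$ directly from the stationary equation~\eqref{C:def}, namely $H C_\infty + C_\infty H = \gamma\, \cS \circ C_\infty + \gamma \Sigma$, rather than bounding $C_\infty$ entrywise. The key observation is that left-multiplying by $H^{-1}$ and taking the trace symmetrizes the left-hand side: since $\Tr(H^{-1} H C_\infty) = \Tr(C_\infty)$ and $\Tr(H^{-1} C_\infty H) = \Tr(C_\infty)$ by the cyclic property of the trace, the operator $\cT$ contributes exactly $2\Tr(C_\infty)$. Thus I would first establish the identity
\[
2\,\Tr(C_\infty) = \gamma\, \Tr\big(H^{-1}\, \cS \circ C_\infty\big) + \gamma\, \Tr(H^{-1}\Sigma)\, ,
\]
which already isolates the target quantity on the left and produces the desired $\tfrac{\gamma}{2}\Tr(H^{-1}\Sigma)$ term; everything then reduces to bounding the cross term $\Tr(H^{-1}\,\cS \circ C_\infty)$.

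Next I would rewrite this cross term as an expectation. By definition of $\cS$,
\[
\Tr\big(H^{-1}\, \cS \circ C_\infty\big) = \Tr\big(H^{-1}\,\E[(x^\top C_\infty x)\, x x^\top]\big) = \E\big[(x^\top C_\infty x)\,\|x\|_{H^{-1}}^2\big]\, ,
\]
using $\Tr(H^{-1} x x^\top) = x^\top H^{-1} x = \|x\|_{H^{-1}}^2$. This is exactly the point where the crude bound (Lemma~\ref{lemma:crude}) enters: since $C_\infty \preceq \tfrac{\gamma\|\Sigma\|_H}{1-\gamma R^2}\,\Id$, I may replace $x^\top C_\infty x$ by $\tfrac{\gamma\|\Sigma\|_H}{1-\gamma R^2}\|x\|^2$, pulling the scalar out of the expectation.

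It remains to control $\E[\|x\|^2 \|x\|_{H^{-1}}^2]$, and here I would invoke the fourth-moment assumption $\E[\|x\|^2 x x^\top] \preceq R^2 H$. Writing $\E[\|x\|^2 \|x\|_{H^{-1}}^2] = \Tr\big(H^{-1}\,\E[\|x\|^2 x x^\top]\big) \le R^2\,\Tr(H^{-1} H) = R^2 d$ gives the clean bound $\Tr(H^{-1}\,\cS \circ C_\infty) \le \tfrac{\gamma R^2 d \|\Sigma\|_H}{1-\gamma R^2}$. Substituting into the trace identity and dividing by two yields
\[
\Tr(C_\infty) \le \frac{\gamma}{2}\,\Tr(H^{-1}\Sigma) + \frac{1}{2}\,\frac{\gamma^2 R^2}{1-\gamma R^2}\, d\,\|\Sigma\|_H\, ,
\]
as claimed. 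The main subtlety is entirely in the cross term: one must resist the temptation to bound it via $R^2\Tr(HC_\infty)$ (which would re-introduce $\Tr(C_\infty)$-type quantities and a cruder constant), and instead feed in the already-established Lemma~\ref{lemma:crude} so that the residual fourth-moment factor collapses to the dimension-dependent $R^2 d$ through the trace-against-$H^{-1}$ step. Everything else is bookkeeping.
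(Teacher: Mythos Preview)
Your proposal is correct and follows essentially the same route as the paper: both multiply the stationary equation by $H^{-1}$, take traces to isolate $2\Tr(C_\infty)$, and then bound $\Tr(H^{-1}\,\cS\circ C_\infty)$ by combining the crude bound $C_\infty\preceq \tfrac{\gamma\|\Sigma\|_H}{1-\gamma R^2}\Id$ with the fourth-moment assumption $\E[\|x\|^2 xx^\top]\preceq R^2 H$. The only cosmetic difference is that the paper phrases the cross-term bound via operator monotonicity ($\cS\circ C_\infty \preceq \tfrac{\gamma\|\Sigma\|_H}{1-\gamma R^2}\,\cS\circ\Id \preceq \tfrac{\gamma R^2\|\Sigma\|_H}{1-\gamma R^2}\,H$, then traces against $H^{-1}$), whereas you unpack the same steps as expectations.
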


\begin{proof}
From Lemma~\ref{lemma:crude} and Equation~\ref{eq:2},
\[
\cS \circ C_\infty \preceq \frac{\gamma \|\Sigma\|_H}{1-\gamma R^2} \,
\cS \circ \Id \preceq \frac{\gamma R^2 \|\Sigma\|_H}{1-\gamma R^2} \,
H \, .
\]
Also, from Equation~\ref{eq:cov_infty}, $C_\infty$ satisfies:
\[
H C_\infty + C_\infty H =
\gamma \cS \circ C_\infty + \gamma \Sigma \, .
\]
Multiplying this by  $H^{-1}$ and taking the trace
leads to:
\begin{eqnarray*}
\Tr(C_\infty) &=& \frac{\gamma}{2} \Tr (H^{-1} \cdot (\cS \circ
                  C_\infty) )
  +\frac{\gamma}{2}\Tr(H^{-1} \Sigma )\\
&\leq & \frac{1}{2} \, \frac{\gamma^2R^2}{1-\gamma R^2} \, \|\Sigma\|_H \,  \Tr (H^{-1} H)
  +\frac{\gamma}{2}\Tr(H^{-1} \Sigma )\\
&= & \frac{1}{2} \, \frac{\gamma^2R^2}{1-\gamma R^2} \,  d \|\Sigma\|_H
  +\frac{\gamma}{2}\Tr(H^{-1} \Sigma )
\end{eqnarray*}
which completes the proof.
\end{proof}

\subsection{Completing the proof of Theorem~\ref{theorem:main}}


\begin{proof}
The proof of the theorem is completed by applying the developed
lemmas.  For the bias term, using convexity leads to:
\begin{eqnarray*}
\frac{1}{2} \E [ \|\overline{w}_{t:T} - w^*\|_H^2  | \xi_0=\cdots \xi_T=0]
&\leq&
\frac{1}{2}  R^2 \E [ \|\overline{w}_{t:T} - w^*\|^2  | \xi_0=\cdots \xi_T=0]\\
&\leq&
\frac{1}{2} \frac{R^2}{T-t} \sum_{t'=t}^{T-1} \E [ \|w_{t'} - w^*\|^2  | \xi_0=\cdots \xi_T=0]\\
&\leq &
\frac{1}{2} \exp(-\gamma \mu t) R^2 \|w_0- w^*\|^2 \, .
\end{eqnarray*}
For the variance term, observe that
\begin{eqnarray*}
\frac{1}{2} \E [\| \overline{w}_{t:T} - w^* \|_H^2 |w_0=w^*]
\leq
\frac{\Tr(C_\infty)}{\gamma (T-t)} 
\leq
\frac{1}{T-t} 
\left(\frac{1}{2}\Tr(H^{-1} \Sigma ) +
\frac{1}{2} \, \frac{\gamma R^2}{1-\gamma R^2} \, d \|\Sigma\|_H \right) \, ,
\end{eqnarray*}
which completes the proof.
\end{proof}

\subparagraph*{Acknowledgements.}

Sham Kakade acknowledges funding from the Washington Research Foundation Fund for Innovation in Data-Intensive Discovery. The
authors also thank Zaid Harchaoui for helpful discussions.

\bibliography{acceleratedStochasticApproximation}

\begin{thebibliography}{10}

\bibitem{Bach14}
Francis~R. Bach.
\newblock Adaptivity of averaged stochastic gradient descent to local strong
  convexity for logistic regression.
\newblock {\em Journal of Machine Learning Research (JMLR)}, volume 15, 2014.

\bibitem{DefossezB15}
Alexandre D{\'e}fossez and Francis~R. Bach.
\newblock Averaged least-mean-squares: Bias-variance trade-offs and optimal
  sampling distributions.
\newblock In {\em AISTATS}, volume~38, 2015.

\bibitem{DieuleveutB15}
Aymeric Dieuleveut and Francis~R. Bach.
\newblock Non-parametric stochastic approximation with large step sizes.
\newblock {\em The Annals of Statistics}, 2015.

\bibitem{FrostigGKS15}
Roy Frostig, Rong Ge, Sham~M. Kakade, and Aaron Sidford.
\newblock Competing with the empirical risk minimizer in a single pass.
\newblock In {\em COLT}, 2015.

\bibitem{JainKKNS16}
Prateek Jain, Sham~M. Kakade, Rahul Kidambi, Praneeth Netrapalli, and Aaron
  Sidford.
\newblock Parallelizing stochastic approximation through mini-batching and
  tail-averaging.
\newblock {\em CoRR}, abs/1610.03774, 2016.

\bibitem{KushnerClark}
Harold~J. Kushner and Dean~S. Clark.
\newblock {\em Stochastic Approximation Methods for Constrained and
  Unconstrained Systems.}
\newblock Springer-Verlag, 1978.

\bibitem{lehmann1998theory}
Erich~L. Lehmann and George Casella.
\newblock {\em Theory of Point Estimation}.
\newblock Springer Texts in Statistics. Springer, 1998.

\bibitem{PolyakJ92}
Boris~T. Polyak and Anatoli~B. Juditsky.
\newblock Acceleration of stochastic approximation by averaging.
\newblock {\em SIAM Journal on Control and Optimization}, volume 30, 1992.

\bibitem{Ruppert88}
David Ruppert.
\newblock Efficient estimations from a slowly convergent robbins-monro process.
\newblock {\em Tech. Report, ORIE, Cornell University}, 1988.

\bibitem{Vaart00}
Aad~W. van~der Vaart.
\newblock {\em Asymptotic Statistics}.
\newblock Cambridge University Publishers, 2000.

\end{thebibliography}

\end{document}